%% file: conjformer_arxiv.tex
\documentclass{article}

\PassOptionsToPackage{numbers,compress}{natbib}
\usepackage[preprint]{neurips_2026}

\usepackage[utf8]{inputenc}
\usepackage[T1]{fontenc}
\usepackage{hyperref}
\usepackage{url}
\usepackage{booktabs}
\usepackage{amsfonts}
\usepackage{microtype}
\usepackage{xcolor}
\usepackage{amsmath,amssymb}
\usepackage{array}
\usepackage{enumitem}

\usepackage{amsmath,amsfonts,amssymb,amsthm,array}
\usepackage{caption}
\usepackage{graphicx}
\usepackage{booktabs}
\usepackage{float}
\usepackage{enumitem}
\usepackage{nicefrac}

\usepackage{enumitem}
\usepackage{algorithm}
\usepackage{algorithmic}
\input{math_commands.tex}

\input{macros}

\usepackage{graphicx}

\newcommand{\method}{\textsc{ConjFormer}\xspace}
\newcommand{\NA}{\textemdash}
\newcommand{\RETURN}{\textbf{return }}

\usepackage{tikz}

\theoremstyle{plain}

\theoremstyle{remark}

\usepackage{mdframed} 
\usepackage{thmtools}

\usepackage[flushleft]{threeparttable} 

\usepackage{caption}
\usepackage{multirow}
\usepackage{colortbl}
\definecolor{bgcolor}{rgb}{0.8,1,1}
\definecolor{bgcolor2}{rgb}{0.8,1,0.8}
\definecolor{niceblue}{rgb}{0.0,0.19,0.56}

\hypersetup{colorlinks,linkcolor={blue},citecolor={blue},urlcolor={blue}}

\usepackage{sidecap}

\definecolor{shadecolor}{gray}{0.9}
\declaretheoremstyle[
headfont=\normalfont\bfseries,
notefont=\mdseries, notebraces={(}{)},
bodyfont=\normalfont,
postheadspace=0.5em,
spaceabove=1pt,
mdframed={
  skipabove=8pt,
  skipbelow=8pt,
  hidealllines=true,
  backgroundcolor={shadecolor},
  innerleftmargin=4pt,
  innerrightmargin=4pt}
]{shaded}

\usepackage{xspace}

\title{\bf Privacy from Symmetry: Orthogonally Equivariant Transformers for LLM Inference}

\newcommand{\affmark}[1]{\textsuperscript{#1}}

\author{%
Alexander Yukhimchuk\affmark{1} \quad
Andrey Shulga \quad
Mladen Kolar\affmark{1,2} \quad
Martin Takáč\affmark{1}
\\[0.5em]
{\normalfont\small
\affmark{1} MBZUAI \quad
\affmark{2} University of Southern California
}
}

\begin{document}

\maketitle

\begin{abstract}
    Running large language models locally is often impractical, pushing inference on sensitive text to third-party providers. Split inference partially mitigates this by keeping tokens on the client and sending only hidden representations, but these representations can still be recovered via nearest-neighbor search against the public embedding table. We propose an orthogonal obfuscation procedure in which the client multiplies embeddings by a secret orthogonal matrix before transmission. To enable correct inference under arbitrary rotations, we introduce \textsc{ConjFormer}, a transformer variant that is exactly $\mathrm{O}(d)$-equivariant via a lightweight normalization change (scalar RMSNorm) together with blockwise orthogonal conjugation of all linear weights. As a result, the server performs the full forward pass entirely in the rotated basis and never observes unrotated hidden states. Experiments on GPT-2 and Llama~3.2~1B models fine-tuned on PubMed show that orthogonal obfuscation eliminates direct cosine nearest-neighbor inversion and reduces token recovery from over 35\% top-10 to at most 1.3\%, while increasing perplexity by only 0.4\% after fine-tuning. These results indicate that enforcing symmetry at the architectural level can provide a practical defense for privacy-preserving LLM inference without noise injection or heavy cryptographic machinery.
\end{abstract}

\section{Introduction}

Large language models (LLMs) have become a central paradigm in modern NLP, achieving state-of-the-art performance across a wide range of tasks~\citep{devlin2019bertpretrainingdeepbidirectional, liu2019robertarobustlyoptimizedbert, Qiu_2020, choukrani2025llm, iklassov2024self,zhumabayeva2024magbert}. As model sizes continue to grow, LLMs are increasingly deployed as cloud-based services, allowing clients to query remote models without maintaining local computational resources~\citep{bommasani2022opportunitiesrisksfoundationmodels, brown2020languagemodelsfewshotlearners}. However, this deployment model raises a critical privacy concern: during inference, users must send their data to the model provider, which can expose sensitive or proprietary information~\citep{chi2018privacypartitioningprotectinguser}. Ensuring privacy of user data at inference time therefore remains an essential and still challenging problem.

Most privacy-preserving methods focus on training-time protection, often using differential privacy and related techniques~\citep{Abadi_2016}. In contrast, \emph{inference-time privacy} for LLMs has received comparatively less attention. Cryptographic approaches such as homomorphic encryption or secure multiparty computation offer strong guarantees, but they introduce prohibitively high computational overhead for practical autoregressive inference~\citep{frikha2025obfuscatuneobfuscatedoffsitefinetuning, xiao2023offsitetuningtransferlearningmodel}.

Another systems-level direction is to run inference inside a trusted execution environment (TEE) backed by a hardware root of trust (HRoT), using remote attestation to terminate encrypted client communication inside the enclave and thus protect model \emph{inputs and outputs} from the cloud operator~\citep{tramer2018slalom, li2024sok}. However, these guarantees rely on specific hardware and a substantial trusted software stack, and known implementation flaws or side channels through timing and resource usage may still reveal sensitive details~\citep{li2024sok}.

A widely used architecture to reduce client-side exposure is \emph{split inference} (SI), in which the client runs the embedding layer (and possibly a few early transformer blocks) locally and sends only intermediate hidden representations to the server~\citep{splitnn_gupta2018distributedlearningdeepneural, thapa2022splitfedfederatedlearningmeets, vepakomma2018splitlearninghealthdistributed, chi2018privacypartitioningprotectinguser}. Although SI avoids sending plaintext tokens, it does \emph{not} prevent reconstruction of the input.

In particular, dense text embeddings and intermediate representations are known to leak substantial information about the original text and are often invertible~\citep{song2020informationleakageembeddingmodels, morris2023textembeddingsrevealalmost, li2023sentenceembeddingleaksinformation, Huang_2024}.

Differentially private noise injection can provide formal guarantees against such invertibility~\citep{dp_privacy, dx_privacy}, but it typically leads to a significant degradation in model quality.

In the standard SI setting for pretrained LLMs, the embedding table is typically public, enabling a straightforward and robust attack: nearest-neighbor (NN) embedding inversion, which recovers tokens by matching server-observed representations against the public embedding table using cosine similarity.
We demonstrate empirically that this attack remains highly effective even after fine-tuning the model.

In this work, we introduce \method, a framework designed for the standard SI setting that directly targets embedding inversion by applying a secret orthogonal transformation to the client-side embeddings. To make such obfuscation feasible, our framework combines (i) lightweight architectural modifications that make each transformer block equivariant to the action of the orthogonal group \(\mathrm{O}(d)\), and (ii) blockwise orthogonal conjugation of the weights, ensuring that the model's functionality is preserved \emph{exactly} when input embeddings are rotated.
Our implementation is available at: \url{https://github.com/katcinskiy/equivariant-private-transformer}.

\subsection{Related Works}

We first review works on privacy-preserving inference, then focus on equivariant and invariant architectures as potential solutions.

\paragraph{Cryptographic approaches.}

A classical approach to private inference relies on secure multiparty computation (MPC) and homomorphic encryption (HE)~\citep{frikha2025obfuscatuneobfuscatedoffsitefinetuning, xiao2023offsitetuningtransferlearningmodel}.
Although these methods provide strong cryptographic guarantees, they are very slow for real-world applications: HE introduces a significant overhead on every matrix multiplication and nonlinear operation~\citep{brutzkus2019lowlatencyprivacypreserving}. 
As a result, pure cryptographic approaches remain impractical for LLM inference.

\paragraph{Prompt modifications.}

Another line of work protects sensitive information by modifying the prompt before it is sent to the server. For example, this may involve masking or replacing identifiable entities~\citep{feyisetan2019privacyutilitypreservingtextualanalysis}, or rewriting or abstracting the input~\citep{yan2024protectingdataprivacylarge, west2019bottlesumunsupervisedselfsupervisedsentence}.
However, such approaches provide limited protection, as masked content can often be inferred from context using the model's prior knowledge~\citep{vats2023recoveringprivacypreservingmaskinglarge}.

\paragraph{Differentially private embedding obfuscation.}

These methods operate in an SI setting and apply the differential privacy (DP)~\citep{dp_privacy} mechanism to hidden representations before sending them to the server~\citep{Abadi_2016, Du_2023, roberts2025learningobfuscationsllmembedding, mai2024splitanddenoiseprotectlargelanguage, feyisetan2019privacyutilitypreservingtextualanalysis}. 
For example, \citet{mai2024splitanddenoiseprotectlargelanguage} adds Laplacian noise to the input embeddings to satisfy \(d_{\chi}\)-privacy~\citep{dx_privacy} guarantees.
Although such methods provide formal privacy guarantees, they face significant practical limitations: DP noise degrades model performance, and as noted by~\citet{mai2024splitanddenoiseprotectlargelanguage}, the size of the auxiliary decoder, which is used to preserve the functionality of the model on the client side, \emph{scales in size with the underlying server-side model}. 
By contrast, \method avoids stochastic noise injection and does not require an additional learned decoder to preserve model functionality.

\paragraph{Embedding obfuscations and architectural defenses.}

A more general line of work relies on heuristic transformations of the embedding space and models rather than formal privacy guarantees, aiming to defend against specific classes of attacks~\citep{zhang2021privacyinferenceattacksdefenses}. 
For instance, \citet{roberts2025learningobfuscationsllmembedding} learn a stochastic encoder that injects structured noise into embeddings, minimizing mutual information between the original and obfuscated representations, thereby mitigating embedding inversion.

Some works attempt to make architectural changes to the model \citep{mishra2023sentinellmsencryptedinputadaptation}.
The main problem of such methods is the uncertainty of new attacks that could be applied to them. 
For example, \citet{lin-etal-2024-inversion} show that the obfuscation from~\citet{mishra2023sentinellmsencryptedinputadaptation} can be easily reversed by exploiting the invariance of differences between neighboring components of the embedding vector.
In a related theoretical result, \citet{semenov2024justsimpletransformationdata} exploited orthogonal matrix obfuscation to show that in a simple linear regression setup, the client's data cannot be recovered without knowing its prior distribution.

Compared to prior work, our approach enforces resistance to embedding inversion at the architectural level.
While previous methods rely on noise injection or heuristic transformations of the embedding space, we make the transformer itself exactly $\mathrm{O}(d)$-equivariant.

This allows embeddings to be obfuscated by secret orthogonal transformations, with weight conjugation preserving functionality. The construction removes direct nearest-neighbor embedding inversion in the original embedding coordinates.

\paragraph{Equivariance and invariance.}

Equivariance and invariance to rotations are well studied in computer vision and scientific machine learning, where data often exhibit geometric structure~\citep{cohen2016steerablecnns, fuchs2020se3transformers3drototranslationequivariant}.
However, these methods typically target low-dimensional groups (e.g., planar or $\mathrm{SO}(3)$ rotations), whereas our setting requires equivariance to the full orthogonal group $\mathrm{O}(d)$ acting on high-dimensional embeddings.
Standard transformers are not $\mathrm{O}(d)$-equivariant; our method provides a construction that enforces this property via lightweight architectural modifications and weight conjugation.

\subsection{Main Contributions}
    
Our main contributions are as follows:
\begin{enumerate}[leftmargin=1.5em]
    
    \item We develop a transformer architecture that is \emph{exactly} \(\mathrm{O}(d)\)-equivariant through lightweight modifications (scalar RMSNorm), making it compatible with arbitrary orthogonal transformation of the embedding space.
    \item Building on this, we propose the \method framework, which applies blockwise orthogonal conjugation to all linear weights so that the model remains functionally identical under secret orthogonal obfuscation. 
    This ensures that the server only observes rotated hidden states and removes direct nearest-neighbor embedding inversion in the original embedding coordinates.
    \item We show that \method fundamentally changes the nature of attacks: once direct matching in the embedding space is eliminated, the natural next step for the adversary is to recover the hidden orthogonal transformations by aligning the model parameters after they have been conjugated.
    Building on this observation, we introduce and analyze a class of weight alignment attacks tailored to orthogonally obfuscated models, including two-sided ALS--Procrustes and Gram-based weight alignment methods.
    \item Through extensive experiments on multiple model families, including GPT-2 (Small, Medium, Large)~\citep{radford2019language} and Llama~3.2~1B~\citep{grattafiori2024llama3herdmodels}, we show that \method preserves standard training and fine-tuning behavior while substantially reducing attack effectiveness.
    In particular, embedding inversion remains highly effective on the original models even after fine-tuning, whereas under our framework, the strongest remaining weight alignment attacks achieve near-random token-recovery accuracy.
    
\end{enumerate}

To the best of our knowledge, \method is the first transformer architecture that explicitly enforces \(\mathrm{O}(d)\)-equivariance as a mechanism for privacy-preserving inference.

\subsection{Threat Model and System Overview}

\begin{figure*}[t]
\centering
\includegraphics[width=\textwidth]{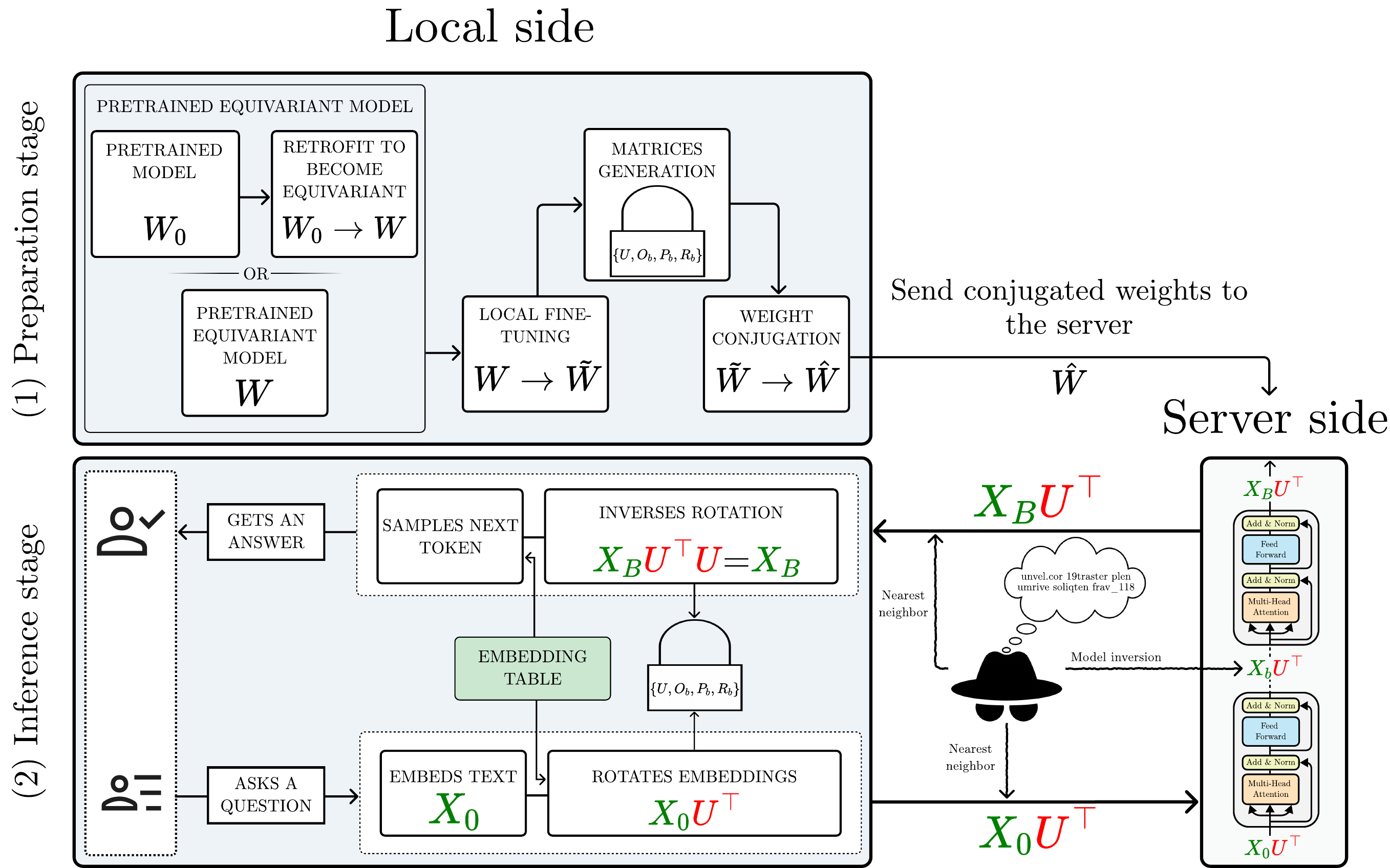}
\caption{
High-level client--server inference with \method.
During preparation, the client starts from a publicly available pretrained model (either already $\mathrm{O}(d)$-equivariant, or retrofitted to become equivariant) with weights \(W\), fine-tunes it locally to obtain \(\tilde{W}\), samples secret matrices $\{U, O_b, P_b, R_b\}$, and applies the corresponding weight conjugation (Appendix~\ref{app:conjugation}) to obtain a model \(\hat{W}\) that is functionally identical in the rotated basis.
The client then uploads the conjugated weights to the server while keeping the embedding table after fine-tuning.
During inference, the client embeds the prompt, rotates the embeddings with matrix \(U\), and sends obfuscated embeddings to the server; the server performs all computation in the rotated basis and returns the rotated output (e.g., $X_B \, U^\top$), which the client de-rotates to sample the next token via LM head locally.
}
\label{fig:high-level-architecture}

\end{figure*}

We assume an honest-but-curious server: the server follows the inference protocol but attempts to recover the client's input tokens from all observations available during inference.
Our setting is open-weight and client-controlled: the base model is public, and the server knows the architecture, public pretrained weights, uploaded conjugated transformer-body weights, and all rotated hidden states sent during inference.
The method does not aim to hide a proprietary server-side model.
The client may fine-tune locally and keeps the secret orthogonal transformations, private fine-tuning data, unrotated fine-tuned body weights, fine-tuned embedding table, and LM head private.
The adversary's goal is token recovery.
The secret key is client- or deployment-specific; if a key is disclosed, all sessions protected by that key are compromised.
Figure~\ref{fig:high-level-architecture} summarizes the setup.

\paragraph{Client-side computation.}
The client performs a limited amount of local computation.
Offline, the client stores the fine-tuned embedding table and LM head, samples the secret orthogonal transformations, conjugates the transformer-body weights once, and uploads the conjugated body to the server.
When input and output embeddings are tied~\citep{press2017usingoutputembeddingimprove}, the fine-tuned embedding table and LM head refer to the same parameter matrix.
During inference, the client computes token embeddings locally, applies the secret rotation before transmission, de-rotates the returned hidden state, and applies the local LM head for next-token prediction.
The transformer body, which accounts for most of the autoregressive inference cost, is evaluated by the server.
Therefore, the method targets clients that can perform local embedding/LM head computation and one-time preprocessing, but do not want to run the full transformer body locally.

\paragraph{Scope and limitations.}
\method is not a cryptographic or differentially private mechanism. It relies on key secrecy and on keeping the fine-tuned embedding table and LM head private on the client side. Orthogonal invariants such as norms, pairwise distances, repeated-token patterns, and attention logits remain visible to the server, so key reuse or known-plaintext observations may enable stronger attacks than those evaluated here.

\section{Methodology}

\subsection{Preliminaries}

In this section, we introduce the theoretical foundations of our work. 
Our construction relies on the action of the orthogonal group, equivariance, and the behavior of linear layers under two-sided orthogonal conjugation.

\paragraph{Orthogonal group.}
Let $\mathrm{O}(d)$ denote the group of $d \times d$ orthogonal matrices:
\[
\mathrm{O}(d) = \{ U \in \mathbb{R}^{d \times d} \mid U^\top U = I \},
\]
equipped with matrix multiplication.
Each $U \in \mathrm{O}(d)$ preserves Euclidean norms and inner products of vectors. 
In our setting, we apply an orthogonal transformation to obfuscate the model's input embeddings and conjugate the weights to preserve the model's functionality.

\paragraph{Group equivariance.}

Let a group \(G\) act on the representation space \(\mathcal{X}\) through a transformation \(\rho(g)\). 
A function \(F\) is equivariant to a group action \(\rho(g)\) if
\[
F(X\rho(g)) = F(X)\rho(g) \qquad \forall\, g \in G,\; X \in \mathcal{X}.
\]
In our construction, the relevant property is functional equivalence under simultaneous rotation of hidden states and conjugation of weights:
\[
\hat F(XU^\top)=F(X)U^\top,
\]
where \(\hat F\) denotes the conjugated block.
Thus the server evaluates the same computation in a rotated hidden-state basis.

\paragraph{Two-sided linear conjugation.}

Consider a linear transformation implemented as a PyTorch-style layer~\citep{paszke2019pytorchimperativestylehighperformance}:
\[
\mathrm{Linear}(X; W, b) = X W^\top + b,
\]
where $W \in \mathbb{R}^{d_{\mathrm{out}} \times d_{\mathrm{in}}}, \, b \in \mathbb{R}^{d_{\mathrm{out}}}$.

Given orthogonal matrices $U \in \mathrm{O}(d_{\mathrm{in}}), \, O \in \mathrm{O}(d_{\mathrm{out}})$, we define the two-sided conjugation
of a linear layer $(W, b)$ by
\[
    \hat{W} = O W U^\top, 
    \qquad 
    \hat{b} = b\, O^\top.
\]

This conjugation plays a central role in \method.
If the input is expressed in a rotated basis as \(X U^\top\), then applying the
conjugate linear layer yields a correspondingly rotated output:
\[
\begin{aligned}
    X U^\top \hat{W}^\top + \hat{b}
    &= X U^\top (O W U^\top)^\top + b\, O^\top \\
    &= X W^\top O^\top + b\, O^\top \\
    &= (X W^\top + b)\, O^\top.
\end{aligned}
\]

In other words, the linear layer performs the same computation as before, except
that its output is expressed in the rotated basis \(O^\top\).

\paragraph{Transformer components.}

A transformer block consists of (i) multi-head self-attention implemented using the linear projections \(W_Q\), \(W_K\), \(W_V\), and \(W_O\); (ii) a feed-forward network (FFN) with two linear layers and a nonlinear activation in between; (iii) residual connections; and (iv) normalization layers.  
Throughout this work, we use the standard \emph{pre-norm} transformer design~\citep{transformerswithouttears}.  
In the following sections, we describe how each of these components is adapted to ensure full \(\mathrm{O}(d)\)-equivariance of the entire transformer block.

\subsection{Architecture}

We first establish exact $\mathrm{O}(d)$-equivariance for GPT-2--style transformers, and then extend the construction to modern architectures such as Llama with SwiGLU MLPs~\citep{shazeer2020gluvariantsimprovetransformer} and RoPE~\citep{su2023roformerenhancedtransformerrotary} positional embeddings; see Appendices~\ref{app:equivariance} and~\ref{app:modern_equivariance} for details.

\method builds on a standard transformer architecture~\citep{vaswani2023attentionneed} with minimal modifications that enforce exact equivariance to the orthogonal group \(\mathrm{O}(d)\).
The key design objective is the following block-level functional equivalence:
\[
\hat F_b(XU^\top)=F_b(X)U^\top,
\]
so that the entire forward pass can be carried out in an arbitrarily rotated basis.
This property allows the server to operate exclusively on obfuscated representations without ever observing unrotated hidden states.

\paragraph{Architectural equivariance.}
Achieving exact \(\mathrm{O}(d)\)-equivariance requires only a single architectural modification: replacing standard normalization with RMSNorm with a \emph{scalar} gain.
This choice ensures that normalization commutes with orthogonal transformations, while preserving the training and optimization behavior of existing transformer models.

For GPT-2 architectures, we replace LayerNorm; for the Llama model, which already uses RMSNorm, we replace the learnable gain vector with a scalar.
A formal proof of equivariance is given in Lemma~\ref{lem:rms-equiv}.

Note that Dropout~\citep{hinton2012improvingneuralnetworkspreventing} does not affect \(\mathrm{O}(d)\)-equivariance of the model, because it is deterministic at inference.

\paragraph{Weight conjugation.}
To preserve model functionality under orthogonal obfuscation of the input embeddings, all linear layers are transformed via two-sided linear conjugation.
As a result, rotating the input embeddings induces a corresponding rotation of all intermediate representations while leaving the underlying computation unchanged.
The exact conjugation rules are provided in Appendix~\ref{app:conjugation}.

\paragraph{Positional encoding.}

Positional encodings do not require architectural changes, but must be compatible with \(\mathrm{O}(d)\)-equivariance.
Absolute positional embeddings are fully supported, as they are added on the client side prior to obfuscation.
Relative positional encodings that modify only attention logits (e.g., additive bias schemes such as ALiBi~\citep{press2022trainshorttestlong}) are also compatible, since attention logits are invariant under orthogonal transformations.
Rotary positional embeddings (RoPE) can be supported under a restriction on the orthogonal transformation; full details are provided in Appendix~\ref{app:rope-equivariance}.

\paragraph{Inference under obfuscation.}
During inference, the client sends obfuscated embeddings \(X_0 U^\top\) to the server, which processes them through the \(B\) conjugated transformer blocks and returns the final rotated output \(X_B U^\top\).
Figure~\ref{fig:low-level-architecture} illustrates the computation performed by a single \method block.

We now formalize this block-level behavior and show that each conjugated transformer block is functionally equivalent to its original counterpart, up to an orthogonal change of basis.

\begin{figure*}[t]
\centering
\includegraphics[width=1.0\linewidth]{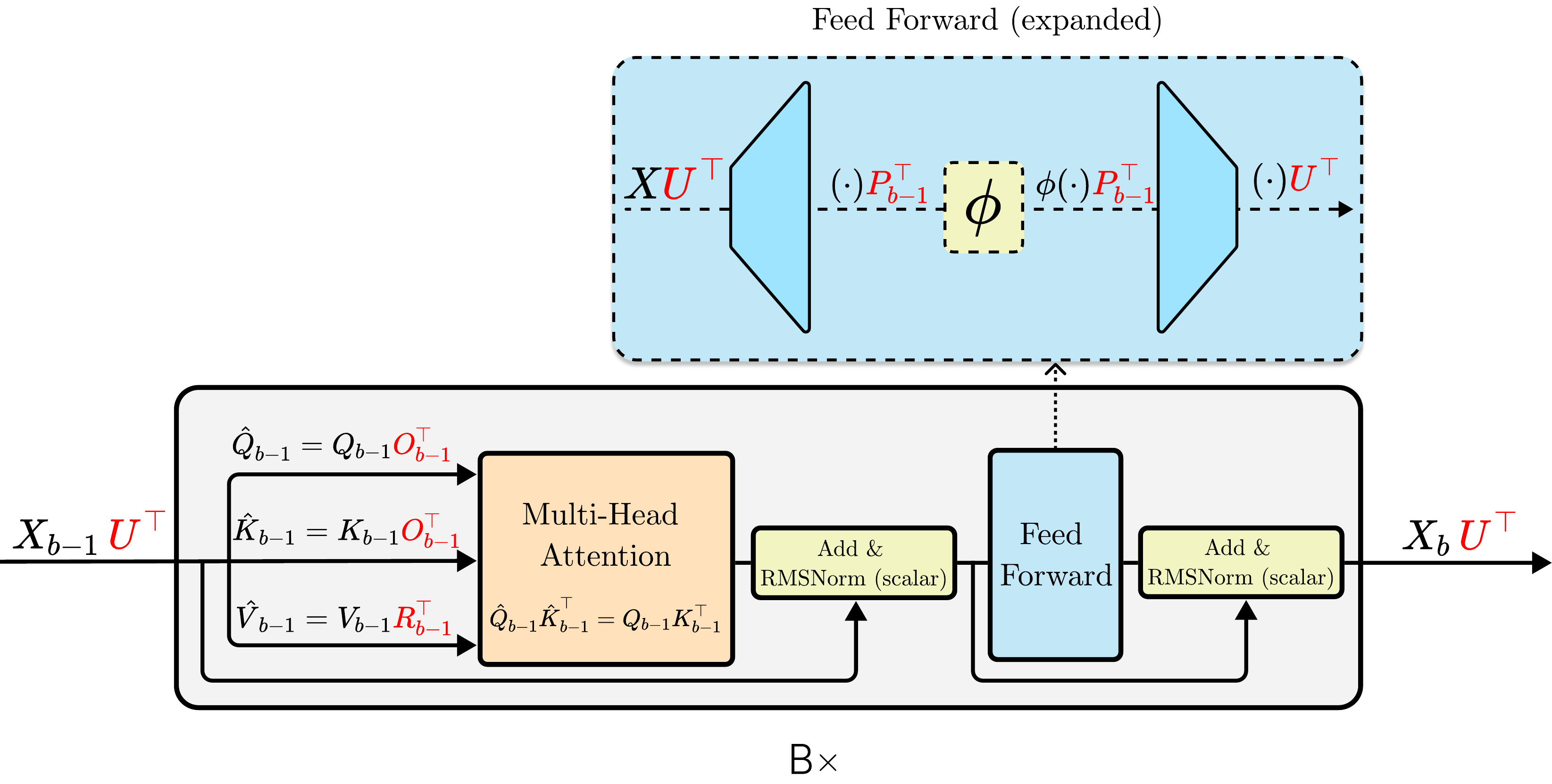}
\caption{
Low-level view of one \method block.
All server-side computations are performed in a rotated basis: the input is $X_{b - 1} U^\top$ and the output $X_{b} U^\top$.
Query/key projections are rotated by $O_b$ and values by $R_b$, which preserves attention logits:
$\hat Q_b \hat K_b^\top = Q_b K_b^\top$.
As a result, the server never observes any unrotated intermediate representations, except for the attention logits.
The feed-forward layer uses a permutation matrix $P_b$ that commutes with the nonlinearity.
}
\label{fig:low-level-architecture}

\end{figure*}

\begin{lemma}[Blockwise Functional Equivalence Under Orthogonal Conjugation]
\label{lem:functional-equivalence}
Consider a transformer block \(b\) with original parameters and its conjugate version,
whose weights and biases are defined as in Appendix~\ref{app:conjugation}.
Assume that the input to the conjugated block is expressed in the rotated basis,
\(
    \hat{X}_{b-1} = X_{b-1} U^\top.
\)
Let \(X_b\) denote the output of the original block applied to \(X_{b-1}\).
Then the output of the conjugated block satisfies
\(
    \hat{X}_{b} = X_{b} U^\top.
\)
\end{lemma}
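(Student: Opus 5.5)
The plan is to follow the pre-norm block through its sub-layers and show that each one carries rotated inputs to rotated outputs. I will use the conjugation rules of Appendix~\ref{app:conjugation} in the form the figure suggests: queries and keys are rotated by $O_b$, values by $R_b$, and $W_O$ is conjugated as $U R_b^\top$ so that its output returns to the $U$ basis. The FFN up-projection is conjugated by $P_b$ with input $U$, and the down-projection as $U\,\cdot\,P_b^\top$. For multi-head attention I will take $O_b$ and $R_b$ to be block-diagonal across heads, so that the head split commutes with the rotation. Write the block as $H = X_{b-1} + \mathrm{Attn}(\mathrm{Norm}_1(X_{b-1}))$ and $X_b = H + \mathrm{FFN}(\mathrm{Norm}_2(H))$. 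It then suffices to prove $\widehat{\mathrm{Norm}}(YU^\top)=\mathrm{Norm}(Y)U^\top$, $\widehat{\mathrm{Attn}}(ZU^\top)=\mathrm{Attn}(Z)U^\top$ and $\widehat{\mathrm{FFN}}(ZU^\top)=\mathrm{FFN}(Z)U^\top$. The residual additions preserve the rotated form by linearity: $X_{b-1}U^\top + \mathrm{Attn}(\cdot)U^\top = HU^\top$, and the same holds for the second residual.

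\emph{Normalization.} Lemma~\ref{lem:rms-equiv} handles this step. Scalar-gain RMSNorm computes $g\,y/\sqrt{\|y\|^2/d+\epsilon}$ row-wise, and $\|yU^\top\|=\|y\|$, so the rotation factors out. The scalar gain is essential here, because a diagonal gain would not commute with $U$.

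\emph{Attention.} By the two-sided linear conjugation identity from the preliminaries, $\hat Q = Q O_b^\top$, $\hat K = K O_b^\top$ and $\hat V = V R_b^\top$, with the biases handled by $\hat b = bO^\top$. Then $\hat Q\hat K^\top = Q O_b^\top O_b K^\top = QK^\top$ for each head, since $O_b$ is block-diagonal per head. The causal mask and softmax therefore produce identical attention weights $A$, and each head's output is $A V_h R_{b,h}^\top$. Concatenating the heads gives $\mathrm{concat}(AV)R_b^\top$. Applying the conjugated output projection $\hat W_O = U W_O R_b^\top$ gives $\mathrm{concat}(AV)R_b^\top R_b W_O^\top U^\top + b_O U^\top$, which equals $\mathrm{Attn}(Z)U^\top$.

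\emph{FFN.} The first layer gives $(ZW_1^\top+b_1)P_b^\top$. An elementwise activation $\sigma$ commutes with permutation matrices, so $\sigma(MP_b^\top)=\sigma(M)P_b^\top$. This is why a permutation is used here rather than a general orthogonal matrix. The second layer then undoes $P_b^\top$ and outputs in the $U$ basis.

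The main obstacle is bookkeeping rather than depth. Three points need care. First, the head structure requires $O_b$ and $R_b$ to be block-diagonal, or at least to act within heads, so that the per-head softmax is preserved. Second, biases must be conjugated consistently. Third, GPT-2's fused QKV weight must be split before conjugation. I would first fix these conventions explicitly from Appendix~\ref{app:conjugation} and then verify each identity above in that order.
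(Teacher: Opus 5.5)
Your proposal is correct and follows the paper's proof essentially step by step. The shared steps are: scalar-RMSNorm equivariance; $\hat Q = QO_b^\top$, $\hat K = KO_b^\top$, $\hat V = VR_b^\top$, with head-wise block-diagonal $O_b, R_b$ preserving the per-head logits; $\hat W_O = U\tilde W_O R_b^\top$ returning the output to the $U$ basis; residuals handled by linearity; and a permutation $P_b$ commuting with the elementwise FFN activation. The only difference is that the paper writes $\hat W_{\mathrm{FFN},1} = P_b^\top \tilde W_{\mathrm{FFN},1} U^\top$ and $\hat W_{\mathrm{FFN},2} = U \tilde W_{\mathrm{FFN},2} P_b$, so the hidden activations sit in the $P_b$ basis rather than $P_b^\top$; this is just the relabeling $P_b \leftrightarrow P_b^\top$ and changes nothing, since both are permutations.
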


\noindent\textit{Proof.} See Appendix~\ref{app:conjugation-proof}.

\paragraph{Normalization.}

Modern LLMs typically rely on LayerNorm~\citep{ba2016layernormalization} or RMSNorm~\citep{zhang2019rootmeansquarelayer} for normalization; however, their standard formulations are not exactly \(\mathrm{O}(d)\)-equivariant due to coordinate-dependent operations such as mean subtraction or per-dimension scaling.
We therefore replace standard normalization with RMSNorm with a scalar gain, which preserves activation scale while remaining exactly \(\mathrm{O}(d)\)-equivariant.

\begin{lemma}[$\mathrm{O}(d)$-equivariance of RMSNorm with a scalar gain]\label{lem:rms-equiv}
Let $U \in \mathrm{O}(d)$ and define
\[
\mathrm{RMSNorm}_\varepsilon(x)
= \gamma \sqrt{d}\,\frac{x}{\sqrt{\|x\|_2^2+\varepsilon}},
\qquad x \in \mathbb{R}^d,\ \varepsilon>0,
\]
with a learned scalar $\gamma \in \mathbb{R}$. Then, for all
$x \in \mathbb{R}^d$ and $U \in \mathrm{O}(d)$,
\[
\mathrm{RMSNorm}_\varepsilon(x U^\top) = \mathrm{RMSNorm}_\varepsilon(x)\,U^\top.
\]
In particular, RMSNorm with a scalar gain is exactly $\mathrm{O}(d)$-equivariant.
\end{lemma}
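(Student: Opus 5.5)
The proof is a direct computation, and its single key fact is that right-multiplication by an orthogonal matrix preserves the Euclidean norm. Throughout, we treat $x$ as a row vector, consistent with the convention $XU^\top$ used in the paper.

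\textbf{Step 1: the norm is unchanged.} For $U \in \mathrm{O}(d)$ we have $UU^\top = I$ as well as $U^\top U = I$, since a square matrix with a one-sided inverse is invertible. Hence
\[
\|xU^\top\|_2^2 = xU^\top (xU^\top)^\top = x U^\top U x^\top = x x^\top = \|x\|_2^2 .
\]
The denominator $\sqrt{\|x\|_2^2+\varepsilon}$ is therefore identical for $x$ and $xU^\top$. It is also strictly positive because $\varepsilon>0$, so the map is well defined everywhere, including at $x=0$.

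\textbf{Step 2: pull out the rotation.} The factor $\gamma\sqrt{d}/\sqrt{\|x\|_2^2+\varepsilon}$ is a scalar, and scalars commute with right-multiplication by a matrix. Therefore
\[
\mathrm{RMSNorm}_\varepsilon(xU^\top) = \frac{\gamma\sqrt{d}}{\sqrt{\|x\|_2^2+\varepsilon}}\, xU^\top = \mathrm{RMSNorm}_\varepsilon(x)\,U^\top ,
\]
which is exactly the claimed identity.

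\textbf{Step 3: apply it row by row.} For a matrix $X$ whose rows are tokens, RMSNorm acts on each row separately, and the $i$-th row of $XU^\top$ is $x_iU^\top$. So the identity extends to the row-wise application used in the transformer block.

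There is no real obstacle here. The only point that needs care is that the gain is a scalar. With a per-coordinate gain $\mathrm{diag}(g)$, the argument would require $\mathrm{diag}(g)$ to commute with $U^\top$ for all $U$. That forces $g$ to be a constant vector, which is why the scalar gain is necessary. We will state this as a brief remark, without a full proof, to justify the architectural change.
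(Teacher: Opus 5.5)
Your proof is correct and matches the paper's argument: because $U^\top$ preserves the Euclidean norm, the denominator is unchanged, and since the remaining factor is a scalar you can pull $U^\top$ out. Your row-wise extension and your remark that a per-coordinate gain would have to commute with every orthogonal matrix (and hence be constant) are correct additions that the paper leaves implicit.
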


\noindent\textit{Proof.} See Appendix~\ref{app:rms-equivariance}.

\section{Attacks}

The main goal of \method is to remove direct nearest-neighbor embedding inversion in the original embedding coordinates. Orthogonal obfuscation does not make token recovery information-theoretically impossible; rather, it changes the attack into a basis-recovery or alignment problem.

We consider two broad families of attacks. The first family operates directly in representation space and leverages invariants preserved under orthogonal transformations. For example, since an orthogonal map preserves $\ell_2$-norms (and pairwise distances), a server observing $XU^\top$ can still access norm-based statistics of the original embeddings, which may already leak token identity.

The second family consists of \emph{weight alignment} attacks.
To maintain $\mathrm{O}(d)$-equivariance, the client applies two-sided linear conjugation to each linear layer.
Although the server does not directly observe the fine-tuned weights $\tilde W$, it has access to their conjugate versions $\hat W$ as well as the public pretrained weights $W$.
Since fine-tuning typically induces only moderate changes to pretrained weights, this residual similarity can be exploited to align $\hat W$ with $W$ and estimate the hidden orthogonal transformation $U$, thereby approximately undoing the obfuscation of the client embeddings.

\subsection{Embedding and Model Inversion Attacks}

Embedding inversion is one of the most widely used attacks in the split inference setting and is often treated as the primary privacy threat in prior work~\citep{roberts2025learningobfuscationsllmembedding, mai2024splitanddenoiseprotectlargelanguage, song2020informationleakageembeddingmodels, morris2023textembeddingsrevealalmost}. The attacker treats each hidden representation as a point in the embedding space and recovers tokens by performing a nearest-neighbor search against the public embedding table using cosine similarity. 
On an original fine-tuned model, this attack is extremely effective: in high-dimensional embedding spaces, token representations tend to be nearly orthogonal, making NN matching highly reliable~\citep{song2020informationleakageembeddingmodels, morris2023textembeddingsrevealalmost}. Mitigating this behavior requires applying a substantial global transformation to the embedding space.

More generally, model inversion attacks extend embedding inversion by learning a decoder that maps hidden representations at any layer of the model back to token sequences~\citep{Erdo_an_2022, Fredrikson2015ModelIA}. 
Such attacks rely on access to aligned pairs of hidden states and their corresponding plaintext tokens.

In \method, direct cosine nearest-neighbor inversion against the original embedding coordinates is no longer directly applicable: the server observes representations only in an unknown rotated basis and cannot simply compare them to the public embedding table.
This does not rule out stronger inversion-style attacks that first recover the basis or learn an alignment between the rotated and original spaces.

\subsection{Norm-based Invariant Attacks}

A simple class of attacks exploits quantities preserved under orthogonal transformations. Since any \(U \in \mathrm{O}(d)\) preserves Euclidean norms, a server observing \(XU^\top\) has direct access to the norm statistics of the original embeddings.

Prior work shows that token embedding norms in LLMs are nonuniform and correlate strongly with token frequency~\citep{oyama2023normwordembeddingencodes}. An attacker can therefore perform token recovery by matching $\ell_2$-norms against the public embedding table. 
Without fine-tuning the embedding matrix, this attack is nearly trivial: when token norms are distinct—as we observe empirically—recovery accuracy approaches \(100\%\). Fine-tuning the embedding table removes the norm structure exploited by this attack, as shown in Appendix~\ref{app:experiments_attacks}.

\subsection{Two-sided ALS--Procrustes Alignment}

We next consider a first \emph{weight alignment} attack that exploits the structured relationship between the public pretrained weights and the conjugate fine-tuned weights uploaded to the server.
For each transformer block $b$, the server knows the public pretrained query/key projections $(W_Q^b, W_K^b)$ and their uploaded conjugate versions $(\hat{W}_Q^b, \hat{W}_K^b)$.
These matrices are related to the unknown fine-tuned weights $(\tilde{W}_Q^b, \tilde{W}_K^b)$ via two-sided linear conjugation:
\[
\begin{aligned}
    \hat{W}_Q^b &= O_b\, \tilde{W}_Q^b\, U^\top, \\
    \hat{W}_K^b &= O_b\, \tilde{W}_K^b\, U^\top,
\end{aligned}
\qquad
U, O_b \in \mathrm{O}(d).
\]

Since fine-tuning typically induces only moderate changes, the attacker may
approximate $\tilde{W}_Q^b \approx W_Q^b$ and $\tilde{W}_K^b \approx W_K^b$ and
estimate $(U, O_b)$ by solving the two-sided orthogonal Procrustes problem:
\[
    \min_{U,\, O_b \in \mathrm{O}(d)}
    \bigl\|\hat{W}_Q^b - O_b\, W_Q^b\, U^\top\bigr\|_F^2
    \;+\;
    \bigl\|\hat{W}_K^b - O_b\, W_K^b\, U^\top\bigr\|_F^2.
\]

In contrast to the classical one-sided Procrustes problem~\citep{Schonemann_1966}, this two-sided formulation does not admit a closed-form solution.
We therefore approximate it using an alternating least-squares (ALS) procedure that alternates between estimating $O_b$ and $U$, where each update reduces to a standard one-sided Procrustes problem solved via a single SVD (see Appendix~\ref{app:als-procrustes}).

Crucially, each block is conjugated by its own orthogonal matrix $O_b$, so
alignment information cannot be aggregated across blocks in this formulation.
Therefore, the attacker can at best use the query and key projections of a single block, i.e., two matrix pairs in total.
All reported ALS--Procrustes experiments use this setting.

\subsection{Gram-based Weight Alignment}

The attacker can also exploit Gram-based weight alignment.
Again, for each block $b$, the server observes conjugate query and key projections:
\[
\hat{W}_Q^b = O_b \, \tilde{W}_Q^b \, U^\top,
\qquad
\hat{W}_K^b = O_b \, \tilde{W}_K^b \, U^\top,
\]
where $U \in \mathrm{O}(d)$ is global and $O_b \in \mathrm{O}(d)$ is block-specific.
As noted in the ALS--Procrustes attack, the block-specific transformations $O_b$ prevent direct alignment across blocks, but they cancel in the Gram matrix:
\[ 
\begin{aligned} 
&\hat{G}_b := (\hat{W}_K^b)^\top \hat{W}_Q^b = U \, (\tilde{W}_K^b)^\top \tilde{W}_Q^b \, U^\top,  \\ 
&G_b := (W_K^b)^\top W_Q^b.
\end{aligned} 
\]
As a result, the attacker obtains $B$ matrix pairs $\{(G_b,\hat{G}_b)\}_{b=1}^B$ linked by a shared unknown rotation $U$ and estimates it by solving:
\[
\min_{U \in \mathrm{O}(d)} \sum_{b=1}^B \bigl\| \hat{G}_b - U G_b U^\top \bigr\|_F^2 .
\]

We optimize this objective on $\mathrm{O}(d)$ using Riemannian gradient descent~\citep{absil2008optimization}.
Empirically, this Gram-based weight alignment attack is substantially stronger than the ALS--Procrustes alignment, since it can aggregate information across blocks by eliminating the block-specific transformation $O_b$.

\subsection{Statistical Matching: Attention Fingerprints}\label{subsec:statistical-matching}

After fine-tuning, the model's weights are fixed, and each token produces a characteristic attention logit when processed in isolation. Orthogonal obfuscation of embeddings does not affect this value, so the server could, in principle, record the logits from each head and layer produced by each obfuscated token sent by the client and treat it as a simple multi-dimensional fingerprint. With enough observations, the server may then try to match these fingerprints to those of the public vocabulary, analogous to recovering a permutation in a classical substitution cipher~\citep{shannon1949secrecy}.

These fingerprints depend on the model's fine-tuned weights, and even a small amount of additional fine-tuning on the client side changes them. This would invalidate any previously collected fingerprint statistics on the server and require the adversary to rebuild a matching table from scratch, as we show in Appendix~\ref{app:statmatch}. 

\section{Experimental Setup}

\begin{wraptable}{r}{0.48\textwidth}
\vspace{-1.2cm}
\centering
\begin{minipage}{0.48\textwidth}
    \centering
    \begin{small}
    \begin{tabular}{lccc}
        \toprule
        Model & Base & ConjFormer & $\Delta \mathrm{PPL}$ \\
        \midrule
        GPT-2 Small  & 19.77 & 20.16 & +1.97\% \\
        GPT-2 Medium & 15.75 & 15.88 & +0.83\% \\
        GPT-2 Large  & 15.63 & 15.65 & +0.13\% \\
        Llama~3.2~1B  & 15.36 & 15.40 & +0.26\% \\
        \bottomrule
    \end{tabular}
    \end{small}
    \captionof{table}{Validation perplexity on OpenWebText after pretraining the base (original) and equivariant models, showing that the equivariant architecture stays within 2\%.}
    \label{tab:pretrain-perplexity}
\end{minipage}
\vspace{-0.9cm}
\end{wraptable}
We evaluate \method on GPT-2 (Small, Medium, Large) and Llama~3.2~1B architectures in the standard language modeling setting, where the objective is to predict the next token given the previous context.

For additional details about experiments, refer to Appendix~\ref{app:experiments}. All experiments were conducted on a system equipped with $4 \times$ NVIDIA A100 GPUs.

\begin{figure*}[t]
\centering
\includegraphics[width=\textwidth]{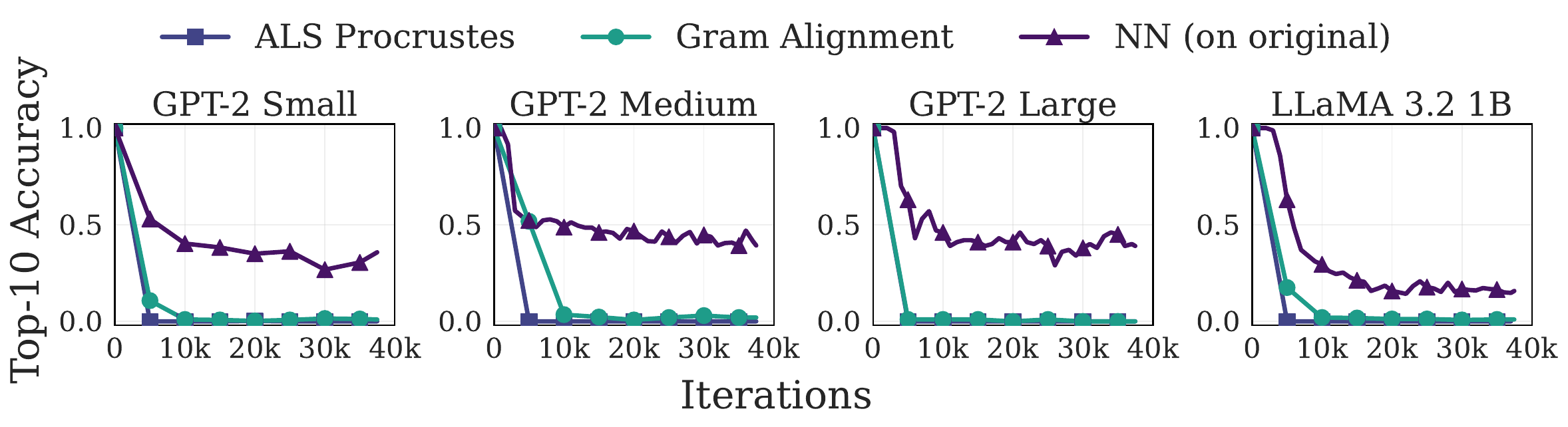}
\caption{
Top-10 token-recovery accuracy for different attacks on nonobfuscated and obfuscated models.
NN inversion is evaluated on the nonobfuscated model, while ALS--Procrustes and Gram-based weight alignment attacks are evaluated on the obfuscated model.
The Gram-based weight alignment attack is more robust than ALS--Procrustes, but still degrades to near-random performance with fine-tuning.
}
\label{fig:all-attacks}
\end{figure*}

\paragraph{Pretraining.}

Our architectural modifications affect all normalization layers and may therefore impact the pretraining process.
To evaluate this, we pretrain equivariant and original versions of models on the OpenWebText~\citep{Gokaslan2019OpenWeb} dataset for 100{,}000 iterations. Table~\ref{tab:pretrain-perplexity} reports validation perplexity with degradation below 2\%.

\paragraph{Fine-tuning.}

\begin{wraptable}{l}{0.495\textwidth}
\centering
\begin{minipage}{0.49\textwidth}
    \centering
    \begin{small}
    \begin{tabular}{lccc}
        \toprule
        Model & Base & ConjFormer & $\Delta \mathrm{PPL}$ \\
        \midrule
        GPT-2 Small  & 12.71 & 12.70 & -0.07\% \\
        GPT-2 Medium & 12.02 & 12.07 & +0.40\% \\
        GPT-2 Large  & 12.51 & 12.46 & -0.40\% \\
        Llama~3.2~1B  & 13.77 & 13.62 & -1.00\% \\
        \bottomrule
    \end{tabular}
    \end{small}
    \captionof{table}{Validation perplexity after fine-tuning on the PubMed abstracts dataset.}
    \label{tab:finetune-perf}
\end{minipage}
\end{wraptable}

We fine-tune both the baseline models and their $\mathrm{O}(d)$-equivariant variants on the PubMed abstracts dataset using identical hyperparameters for 5 epochs with 4{,}096 tokens per update.
We use a relatively large learning rate ($10^{-3}$) to ensure sufficient deviation from the pretrained weights, which is necessary to reduce the effectiveness of NN inversion and weight alignment attacks.
Table~\ref{tab:finetune-perf} reports the best validation perplexity, showing only a small performance drop for some models.

\paragraph{Retrofitting.}

The only architectural change needed for $\mathrm{O}(d)$-equivariance is replacing normalization layers with RMSNorm with a scalar gain.
To avoid pretraining from scratch, we retrofit pretrained models by making this replacement, keeping all other parameters fixed, and initializing the scalar gain to match the average scale of the original normalization.
After brief fine-tuning, all retrofitted models recover the original validation perplexity within 2{,}000 iterations; see Appendix~\ref{app:experiments_retrofitting} and Figure~\ref{fig:retrofit-perplexity-curves}.

\paragraph{Attacks.}

During the fine-tuning mentioned above, we measure how accurately an adversary can recover the original input tokens after a certain number of fine-tuning iterations.
For each position, the attacker outputs a ranked list of candidate tokens, and we report the fraction of positions where the true token appears in the top-$k$ predictions.

We evaluate the strongest applicable attack in each setting.
For the original model, we use NN embedding inversion.
For the model under \method, direct nearest-neighbor embedding inversion in the original embedding coordinates is inapplicable by construction; we therefore evaluate weight alignment attacks, including ALS--Procrustes alignment and Gram-based weight alignment.

Figure~\ref{fig:all-attacks} shows the results of the attacks during the fine-tuning process.
NN inversion on the nonobfuscated model quickly reaches a stable plateau.
In contrast, both Procrustes and Gram-based weight alignment attacks on the obfuscated model degrade with fine-tuning: the Procrustes attack collapses rapidly, while the Gram-based weight alignment—despite access to all layers—also decays to near-random performance.
We observe the same qualitative behavior across model sizes and architectures.

\section{Discussion and Limitations}

Our results show that orthogonal obfuscation, combined with an $\mathrm{O}(d)$-equivariant transformer, fundamentally changes the attacker's problem.
Without obfuscation, the adversary can directly perform embedding inversion in the embedding space, resulting in a simple, highly effective attack.
After obfuscating the hidden representations via an orthogonal rotation, direct nearest-neighbor embedding inversion in the original embedding coordinates becomes inapplicable.
Orthogonal conjugation ensures that the server-side model remains functionally equivalent under this obfuscation, but forces the adversary to first recover the hidden transformations via a substantially harder weight alignment problem.
Our approach does not provide formal cryptographic guarantees; its goal is narrower: it removes direct nearest-neighbor embedding inversion in the original embedding coordinates while preserving exact server-side inference in the rotated basis.

\newpage
\small{\bibliography{references}}


\appendix
\newpage

\part*{Supplementary Material}

\tableofcontents

\newpage

\section{Core Equivariance for GPT-2--style Transformers}
\label{app:equivariance} 
\subsection{Blockwise Conjugation Rules}
\label{app:conjugation}

In this section, we formalize weight conjugation of the transformer with pre-norm normalization that preserves its functionality while masking the secret matrix \(U\) used to transform input embeddings on the client side. 
For linear transformations, we use the PyTorch-style Linear layer:
\[
    \mathrm{Linear}(X; W, b) = XW^\top + b.
\]
We use a scalar trainable parameter in RMSNorm so that it remains $\mathrm{O}(d)$-equivariant, since the $\ell_2$-norm is invariant under orthogonal transformations.

Overall, we assume the following invariant across layers:

\begin{enumerate}
    \item Before each block, the embeddings are of the form
    \[
        \hat{X}_{\mathrm{in}} = X U^\top.
    \]
    \item After each block, the embeddings are of the form
    \[
        \hat{X}_{\mathrm{out}} = X U^\top.
    \]
\end{enumerate}

For every block \(b\), we define the following conjugated weights.
Here $U, O_b, R_b \in \mathbb{R}^{d \times d}$, where $O_b, R_b$ are block-diagonal orthogonal matrices that act independently on each attention head, $H$ is the number of attention heads and $d_h$ the per-head dimension:
\[
O_b = \mathrm{diag}(O_{b,1}, \dots, O_{b,H}), \qquad
R_b = \mathrm{diag}(R_{b,1}, \dots, R_{b,H}),
\]
where each block satisfies
\[
O_{b,h},\, R_{b,h} \in \mathrm{O}(d_h),
\]
and the full model dimension is
\[
d = H \cdot d_h.
\]

We also use an FFN mixing matrix \(P_b \in \mathbb{R}^{d_{\mathrm{FFN}} \times d_{\mathrm{FFN}}}\), which is chosen to be a permutation matrix. Thus, the conjugation rules are as follows:
\[
\begin{aligned}
    &\hat{W}_{Q}^{b} = O_{b} \tilde{W}_{Q}^{b} U^\top, \quad
     \hat{b}_{Q}^{b} = \tilde{b}_{Q}^{b} O_b^\top, \\
    &\hat{W}_{K}^{b} = O_{b} \tilde{W}_{K}^{b} U^\top, \quad
     \hat{b}_{K}^{b} = \tilde{b}_{K}^{b} O_b^\top, \\
    &\hat{W}_{V}^{b} = R_{b} \tilde{W}_{V}^{b} U^\top, \quad
     \hat{b}_{V}^{b} = \tilde{b}_{V}^{b} R_b^\top, \\
    &\hat{W}_{O}^{b} = U \tilde{W}_{O}^{b} R_b^\top, \quad
     \hat{b}_{O}^{b} = \tilde{b}_{O}^{b} U^\top, \\
    &\hat{W}_{\mathrm{FFN},1}^{b} = P_b^\top \tilde{W}_{\mathrm{FFN},1}^{b} U^\top, \quad
     \hat{b}_{\mathrm{FFN},1}^{b} = \tilde{b}_{\mathrm{FFN},1}^{b} P_b, \\
    &\hat{W}_{\mathrm{FFN},2}^{b} = U \tilde{W}_{\mathrm{FFN},2}^{b} P_b, \quad
     \hat{b}_{\mathrm{FFN},2}^{b} = \tilde{b}_{\mathrm{FFN},2}^{b} U^\top.
\end{aligned}
\]

The fact that these conjugation rules preserve the functionality of each block is formalized in Lemma~\ref{lem:functional-equivalence}, and the full proof is provided in Appendix~\ref{app:conjugation-proof}.

\subsection{Proof of Lemma~\ref{lem:functional-equivalence}}
\label{app:conjugation-proof}
\begin{proof}
Assume that the input to block \(b\) is expressed in the rotated basis,
\[
    \hat{X}_{\mathrm{in}} = X_{\mathrm{in}} U^\top.
\]

\paragraph{RMSNorm.}

Since RMSNorm uses a single scalar parameter and the \(\ell_2\)-norm is invariant under
orthogonal transformations, it holds that
\[
    \mathrm{RMSNorm}(X U^\top)
    = \mathrm{RMSNorm}(X) U^\top.
\]
A formal proof is provided in Appendix~\ref{app:rms-equivariance}.

\paragraph{Attention.}

Using the definition of the conjugated \(Q\)-projection,
\[
\begin{aligned}
    \hat{Q}
        &= \hat{X}_{\mathrm{in}} \hat{W}_Q^\top + \hat{b}_Q \\
        &= X_{\mathrm{in}}
           U^\top (O_b \tilde{W}_Q U^\top)^\top + \tilde{b}_Q\, O_b^\top \\
        &= (X_{\mathrm{in}} \tilde{W}_Q^\top + \tilde{b}_Q)\, O_b^\top
         = Q O_b^\top.
\end{aligned}
\]

For the \(K\)-projection:
\[
\begin{aligned}
    \hat{K}
        &= \hat{X}_{\mathrm{in}} \hat{W}_K^\top + \hat{b}_K \\
        &= X_{\mathrm{in}}
           U^\top (O_b \tilde{W}_K U^\top)^\top + \tilde{b}_K\, O_b^\top \\
        &= (X_{\mathrm{in}} \tilde{W}_K^\top + \tilde{b}_K)\, O_b^\top
         = K O_b^\top.
\end{aligned}
\]

Similarly, for the \(V\)-projection:
\[
\begin{aligned}
    \hat{V}
        &= \hat{X}_{\mathrm{in}} \hat{W}_V^\top + \hat{b}_V \\
        &= X_{\mathrm{in}}
           U^\top (R_b \tilde{W}_V U^\top)^\top + \tilde{b}_V\, R_b^\top \\
        &= (X_{\mathrm{in}} \tilde{W}_V^\top + \tilde{b}_V)\, R_b^\top
         = V R_b^\top.
\end{aligned}
\]

\paragraph{Attention logits.}

Because \(O_b\) is orthogonal,
\[
    \hat{Q} \hat{K}^\top
    = (Q O_b^\top)(K O_b^\top)^\top
    = Q K^\top.
\]

Since $O_b=\mathrm{diag}(O_{b,1},\dots,O_{b,H})$ acts head-wise, the identity
$\hat Q^{(h)}(\hat K^{(h)})^\top = Q^{(h)}(K^{(h)})^\top$ holds for each head $h$,
and therefore the multi-head attention logits are preserved.
Thus, the attention logits remain invariant.

\paragraph{Attention output.}

Let \(A = \mathrm{Softmax}(QK^\top / \sqrt{d_h})\, V\).
Using invariance of logits and the transformed \(V\),
\[
    \hat{A}
    = \mathrm{Softmax}\!\left(\frac{\hat{Q}\hat{K}^\top}{\sqrt{d_h}}\right)\hat{V}
    = A R_b^\top.
\]
The output projection satisfies
\[
    \hat{Y}
    = \hat{A} \hat{W}_O^\top + \hat{b}_O
    = A R_b^\top (U \tilde{W}_O R_b^\top)^\top + \tilde{b}_O U^\top
    = Y U^\top,
\]
where \(Y\) is the output of the original block.

The residual connection preserves the rotation \(U^\top\) from the right:
\[
    \hat{Z} = \hat{X}_{\mathrm{in}} + \hat{Y}
            = (X_{\mathrm{in}} + Y) U^\top
            = Z U^\top.
\]

\paragraph{FFN block.}

The FFN input satisfies \(\hat{Z} = Z U^\top\), so
\[
\begin{aligned}
    \hat{A}_{\mathrm{FFN}}
        &= \hat{Z} \hat{W}_{\mathrm{FFN},1}^\top + \hat{b}_{\mathrm{FFN},1} \\
        &= Z U^\top (P_b^\top \tilde{W}_{\mathrm{FFN},1} U^\top)^\top
           + \tilde{b}_{\mathrm{FFN},1}\, P_b \\
        &= (Z \tilde{W}_{\mathrm{FFN},1}^\top + \tilde{b}_{\mathrm{FFN},1})\, P_b
         = A_{\mathrm{FFN}} P_b.
\end{aligned}
\]
Since the FFN uses an elementwise activation function (such as ReLU or GELU), it commutes with the permutation matrix \(P_b\):
\[
    \sigma(A_{\mathrm{FFN}} P_b) = \sigma(A_{\mathrm{FFN}}) P_b.
\]
For the second linear transformation,
\[
\begin{aligned}
    \hat{Y}_{\mathrm{FFN}}
        &= \sigma(A_{\mathrm{FFN}} P_b)\hat{W}_{\mathrm{FFN},2}^\top
           + \hat{b}_{\mathrm{FFN},2} \\
        &= \sigma(A_{\mathrm{FFN}}) P_b (U \tilde{W}_{\mathrm{FFN},2} P_b)^\top
           + \tilde{b}_{\mathrm{FFN},2}\, U^\top \\
        &= (\sigma(A_{\mathrm{FFN}}) \tilde{W}_{\mathrm{FFN},2}^\top
            + \tilde{b}_{\mathrm{FFN},2})\, U^\top
         = Y_{\mathrm{FFN}} U^\top.
\end{aligned}
\]

A final residual connection yields
\[
    \hat{X}_b
    = \hat{Z} + \hat{Y}_{\mathrm{FFN}}
    = (Z + Y_{\mathrm{FFN}}) U^\top
    = X_b U^\top.
\]

\paragraph{Conclusion.}

Thus the block takes the rotated input \(X_{b-1} U^\top\) and produces the
rotated output \(X_b U^\top\), proving the lemma.
\end{proof}

\section{Extensions to Modern Architectures}\label{app:modern_equivariance}

\subsection{Proof of Lemma~\ref{lem:rms-equiv}}\label{app:rms-equivariance}

\begin{proof}
Let $x \in \mathbb{R}^d$ and let $U \in \mathrm{O}(d)$ be any orthogonal matrix.
RMSNorm with a scalar gain $\gamma$ and stability constant $\varepsilon>0$ is defined as
\[
\mathrm{RMSNorm}_\varepsilon(x)
= \gamma \sqrt{d}\,\frac{x}{\sqrt{\|x\|_2^2+\varepsilon}}.
\]
Since orthogonal matrices preserve the Euclidean norm,
\[
\|xU^\top\|_2^2 = \|x\|_2^2.
\]
Therefore,
\[
\mathrm{RMSNorm}_\varepsilon(x U^\top)
= \gamma \sqrt{d}\,\frac{xU^\top}{\sqrt{\|xU^\top\|_2^2+\varepsilon}}
= \gamma \sqrt{d}\,\frac{xU^\top}{\sqrt{\|x\|_2^2+\varepsilon}}
= \mathrm{RMSNorm}_\varepsilon(x)\,U^\top.
\]
Thus RMSNorm with a scalar gain is exactly $\mathrm{O}(d)$-equivariant.
\end{proof}

\subsection{Equivariance with RoPE}\label{app:rope-equivariance}

\paragraph{Non-equivariance of RoPE.}

We consider attention logits between positions $t$ and $s$.
Let $R(t)$ denote the RoPE operator applied on the right, so that
\[
Q_t := Q R(t), \qquad K_s := K R(s).
\]
Without obfuscation, the RoPE attention logits are
\[
\ell_{t,s}
:= Q_t K_s^\top
= Q R(t)\, (K R(s))^\top
= Q\, R(t) R(s)^\top\, K^\top.
\]

After obfuscation, we have $\hat Q = Q O_b^\top$ and $\hat K = K O_b^\top$, hence
\[
\hat Q_t := \hat Q R(t) = Q O_b^\top R(t), \qquad
\hat K_s := \hat K R(s) = K O_b^\top R(s),
\]
and the corresponding logits become
\[
\hat \ell_{t,s}
:= \hat Q_t \hat K_s^\top
= Q\, O_b^\top R(t)\, R(s)^\top O_b\, K^\top.
\]
In general, $\hat \ell_{t,s} \neq \ell_{t,s}$ because $O_b$ does not commute with $R(t)R(s)^\top$.

\paragraph{Restoring equivariance by restricting \(O_b\).}

To restore equivariance, it suffices to restrict $O_b$ to commute with $R(t)$
for all positions $t$:
\[
O_b R(t) = R(t) O_b \qquad \forall t.
\]
A simple sufficient choice is to take $O_b$ block-diagonal with aligned $2 \times 2$
rotation blocks:
\[
O_b = \bigoplus_{i=1}^{d/2} V(\theta_i),
\qquad V(\theta_i) \in \mathrm{SO}(2).
\]
Since both $O_b$ and $R(t)$ are block-diagonal with the same $2 \times 2$ block
structure, they commute, hence also $O_b$ commutes with $R(t)R(s)^\top$ for all
$t,s$.
Therefore,
\[
\hat \ell_{t,s}
= Q\, O_b^\top R(t) R(s)^\top O_b\, K^\top
= Q\, R(t) R(s)^\top\, K^\top
= \ell_{t,s},
\]
and RoPE equivariance is preserved.

\subsection{Equivariant Conjugation of SwiGLU MLPs}

Llama replaces the classical two-layer FFN with a gated MLP (SwiGLU) that lacks bias terms in its projections. For an input
$Z \in \mathbb{R}^{d}$ the MLP is
\[
\mathrm{MLP}(Z)
= \Bigl(\mathrm{SiLU}(Z \tilde{W}_{\mathrm{gate}}^\top)\ \odot\ (Z \tilde{W}_{\mathrm{up}}^\top)\Bigr)\, \tilde{W}_{\mathrm{down}}^\top,
\]
where $\tilde{W}_{\mathrm{gate}}, \tilde{W}_{\mathrm{up}} \in \mathbb{R}^{d_{\mathrm{FFN}} \times d }$,
$\tilde{W}_{\mathrm{down}} \in \mathbb{R}^{d \times d_{\mathrm{FFN}}}$, and $\odot$ denotes
elementwise multiplication.

Assume the block input is expressed in the rotated basis, $\hat{Z} = Z U^\top$.
To preserve equivariance through the gated hidden dimension, we introduce a
block-specific change of basis $P_b \in \mathbb{R}^{d_{\mathrm{FFN}} \times d_{\mathrm{FFN}}}$ in the
MLP intermediate space. 
In our implementation, $P_b$ is a permutation matrix, and therefore it commutes with elementwise operations, so that for arbitrary matrices $M$ and $N$:
\[
\sigma(N P_b) = \sigma(N) P_b,
\qquad
(N P_b)\odot(M P_b) = (N \odot M) P_b.
\]
We then update the Llama MLP weights by conjugation:
\[
\hat{W}_{\mathrm{gate}} = P_b^\top \tilde{W}_{\mathrm{gate}} U^\top,
\qquad
\hat{W}_{\mathrm{up}}   = P_b^\top \tilde{W}_{\mathrm{up}} U^\top,
\qquad
\hat{W}_{\mathrm{down}} = U \tilde{W}_{\mathrm{down}} P_b.
\]
With these definitions,
\[
\hat{Z}\hat{W}_{\mathrm{gate}}^\top = Z \tilde{W}_{\mathrm{gate}}^\top P_b,
\qquad
\hat{Z}\hat{W}_{\mathrm{up}}^\top   = Z \tilde{W}_{\mathrm{up}}^\top P_b,
\]
and therefore the gated activation transforms as
\[
\mathrm{SiLU}(\hat{Z}\hat{W}_{\mathrm{gate}}^\top)\odot(\hat{Z}\hat{W}_{\mathrm{up}}^\top)
=
\bigl(\mathrm{SiLU}(Z \tilde{W}_{\mathrm{gate}}^\top)\odot(Z \tilde{W}_{\mathrm{up}}^\top)\bigr) P_b.
\]
Finally, the down projection yields
\[
\widehat{\mathrm{MLP}}(\hat{Z})
=
\bigl(\mathrm{SiLU}(Z \tilde{W}_{\mathrm{gate}}^\top)\odot(Z \tilde{W}_{\mathrm{up}}^\top)\bigr)\,
P_b\, \hat{W}_{\mathrm{down}}^\top
=
\mathrm{MLP}(Z)\, U^\top.
\]
So the Llama MLP preserves the same rotated-basis equivariance as the rest of the block.

\paragraph{Grouped-query attention.}
In grouped-query attention, keys/values have fewer heads and are shared across groups of query heads (via simple repetition to match the query-head count).
We handle this by using the same head-wise orthogonal conjugation as in standard attention, with the only change that the rotation assigned to a KV head is reused for all query heads that share it (i.e., the same blocks are repeated across the corresponding query-head groups).

\section{Weight Alignment Attack Algorithms}

\subsection{Overview}

Under orthogonal obfuscation, direct NN inversion is no longer applicable.
Instead, the attacker must first recover the unknown orthogonal transformation $U \in \mathrm{O}(d)$ that maps obfuscated representations back to the original embedding space.
Once an estimate $\hat{U}$ is obtained, token recovery reduces to standard embedding inversion via cosine similarity against the public embedding table.

Accordingly, we consider weight alignment attacks whose primary objective is to estimate $U$ from the relationship between public pretrained weights and their conjugated counterparts.
After estimating $U$, all attacks use the same token-recovery procedure.

\subsection{ALS--Procrustes Alignment}\label{app:als-procrustes}

Because each transformer block is conjugated by its own orthogonal matrix $O_b$,
the alignment problem must be solved independently for each block.
In this appendix, we describe the ALS--Procrustes procedure for a single block
and omit the block index $b$ for simplicity.
In this setting, the attacker can use at most two matrix pairs: the query and
key projections $(W_Q,\hat W_Q)$ and $(W_K,\hat W_K)$.

The attacker estimates the hidden orthogonal matrices by solving:
\[
    \min_{O,\,U \in \mathrm{O}(d)}
    \bigl\|\hat{W}_Q - O\, W_Q\, U^\top\bigr\|_F^2
    \;+\;
    \bigl\|\hat{W}_K - O\, W_K\, U^\top\bigr\|_F^2.
\]
Since the objective is bilinear in $(O,U)$, we optimize it using an alternating
least-squares (ALS) procedure.
Each subproblem admits a closed-form solution via a single SVD.
The full procedure is summarized in Algorithm~\ref{alg:als-procrustes}.

\begin{algorithm}[H]
\caption{Two-sided ALS--Procrustes weight alignment (single block, $Q/K$)}
\label{alg:als-procrustes}
\begin{algorithmic}[1]
    \REQUIRE Public weights $(W_Q, W_K)$, conjugated weights $(\hat{W}_Q, \hat{W}_K)$, iterations $T$
    \STATE Initialize $O^{(0)} \gets I_d$, $U^{(0)} \gets I_d$
    \FOR{$t = 1,\dots,T$}
        \STATE $M_{\mathrm{out}}^{(t)} \gets
        \hat{W}_Q\, U^{(t-1)}\, W_Q^{\!\top}
        \;+\;
        \hat{W}_K\, U^{(t-1)}\, W_K^{\!\top}$
        \STATE $L, \Sigma, R^\top \gets \mathrm{SVD}\!\bigl(M_{\mathrm{out}}^{(t)}\bigr)$
        \STATE $O^{(t)} \gets L R^\top$
        \STATE $M_{\mathrm{in}}^{(t)} \gets
        \hat{W}_Q^{\!\top}\, O^{(t)}\, W_Q
        \;+\;
        \hat{W}_K^{\!\top}\, O^{(t)}\, W_K$
        \STATE $L, \Sigma, R^\top \gets \mathrm{SVD}\!\bigl(M_{\mathrm{in}}^{(t)}\bigr)$
        \STATE $U^{(t)} \gets L R^\top$
    \ENDFOR
    \STATE \RETURN $O^{(T)}, U^{(T)}$
\end{algorithmic}
\end{algorithm}

\section{Experiment Details}\label{app:experiments}

\subsection{Pretraining}

For pretraining, we used the OpenWebText dataset (2B tokens) and trained on $4 \times$ NVIDIA A100 GPUs with 65{,}536 tokens per optimizer update.
We used AdamW~\citep{loshchilov2019decoupledweightdecayregularization} ($\beta_1=0.9$, $\beta_2=0.999$, weight decay $=0.1$) with a peak learning rate of $3\times10^{-4}$ for 100{,}000 iterations, using a linear warmup over the first 2{,}000 iterations followed by cosine decay.
Figure~\ref{fig:pretrain-perplexity-curves} shows the validation perplexity during pretraining, confirming that the equivariant architecture preserves standard training behavior across all model sizes.

\begin{figure*}[ht]
    \centering
    \includegraphics[width=\textwidth]{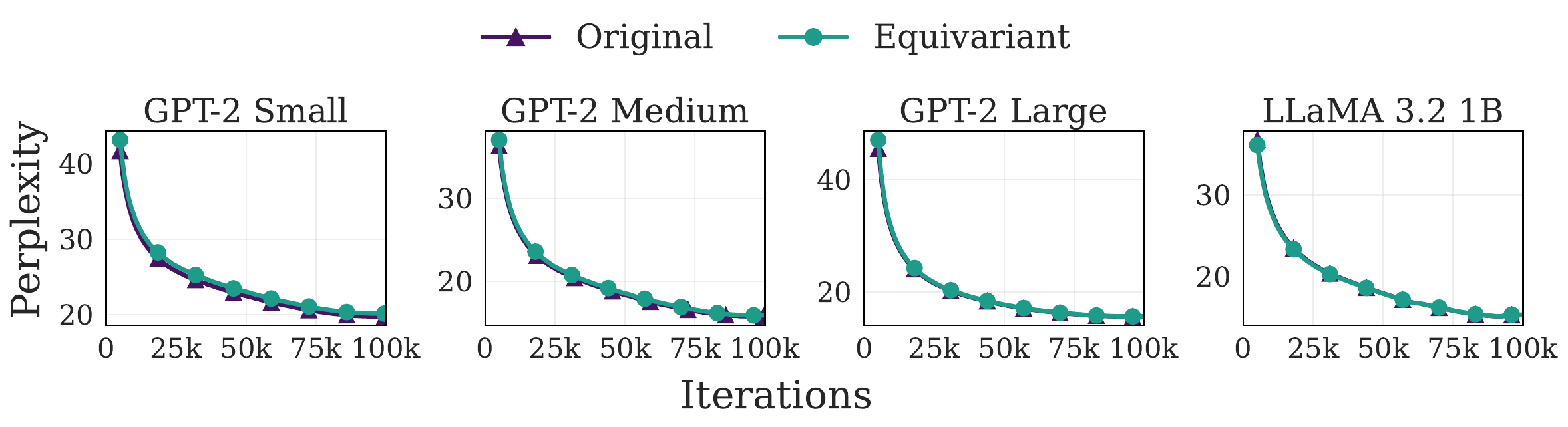}
    \caption{
    Validation perplexity during pretraining on OpenWebText for GPT-2 Small, Medium, Large, and Llama~3.2~1B.
    Equivariant models closely match original versions throughout training.
    }
    \label{fig:pretrain-perplexity-curves}
\end{figure*}

\subsection{Fine-tuning}

For fine-tuning, we trained all models on the PubMed abstracts dataset using the same optimization setup as in pretraining.
We used AdamW ($\beta_1=0.9$, $\beta_2=0.999$, weight decay $=0.1$) with a learning rate of $10^{-3}$ and 4{,}096 tokens per optimizer update, employing a linear warmup over the first 2{,}000 iterations followed by cosine decay.
Figure~\ref{fig:finetune-perplexity-curves} shows the validation perplexity during fine-tuning, indicating that the equivariant and original models exhibit similar adaptation behavior with only a minor difference in final perplexity.

\begin{figure*}[ht]
    \centering
    \includegraphics[width=\textwidth]{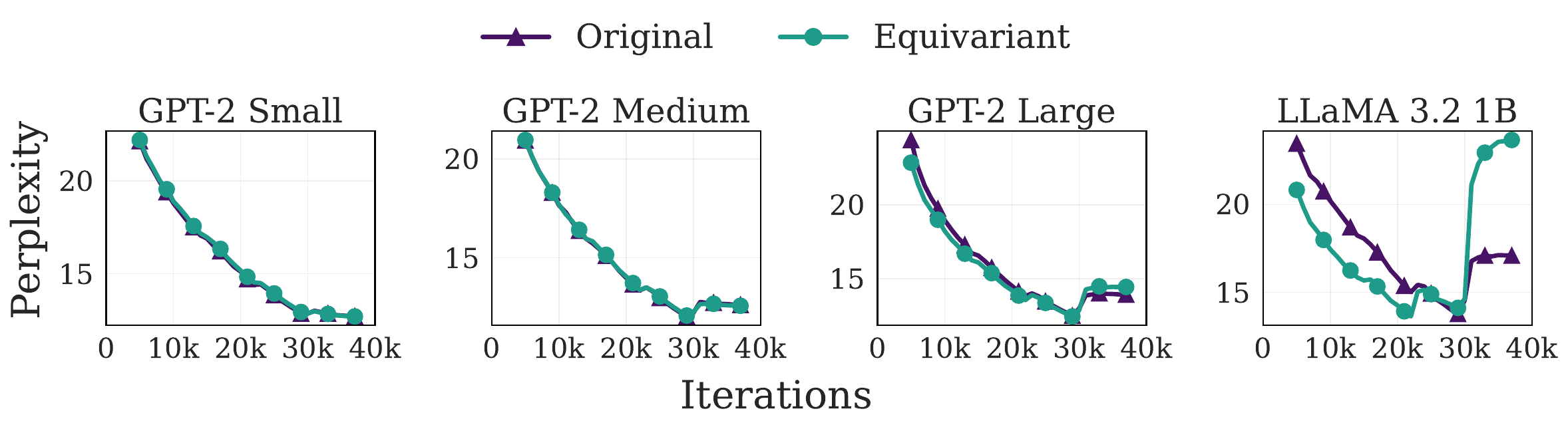}
    \caption{
    Validation perplexity during fine-tuning on the PubMed abstracts dataset for GPT-2 Small, Medium, Large, and Llama~3.2~1B.
    }
    \label{fig:finetune-perplexity-curves}
\end{figure*}

\subsection{Retrofitting}\label{app:experiments_retrofitting}

For retrofitting, we start from a publicly available pretrained model that is not $\mathrm{O}(d)$-equivariant, replace only the normalization layers with RMSNorm with a scalar gain, and then perform a short fine-tuning on the PubMed abstracts to recover performance. We use AdamW with learning rate $10^{-4}$ and 8{,}192 tokens per optimizer update. The goal is to demonstrate that \method can be obtained without retraining from scratch: equivariance can be enforced by a minimal architectural change followed by lightweight adaptation. Figure~\ref{fig:retrofit-perplexity-curves} shows the retrofitting curves.

\begin{figure*}[ht]
    \centering
    \includegraphics[width=\textwidth]{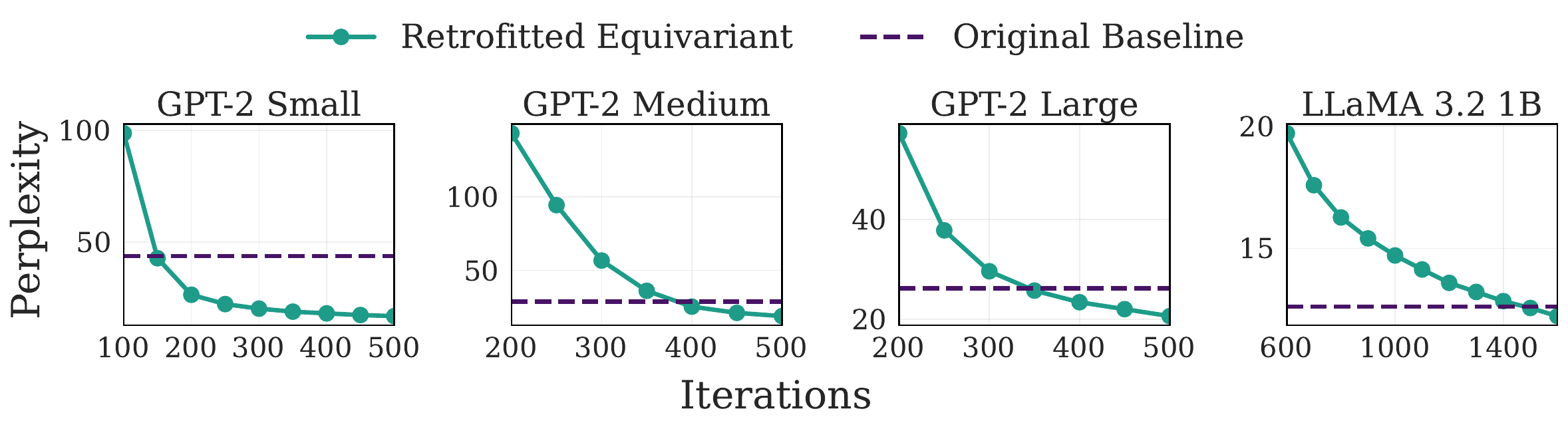}
    \caption{
    Validation perplexity during retrofitting on the PubMed abstracts dataset after replacing only the normalization layers with RMSNorm with a scalar gain.
    }
    \label{fig:retrofit-perplexity-curves}
\end{figure*}

\subsection{Attacks}\label{app:experiments_attacks}

For both the ALS--Procrustes and Gram-based weight alignment attacks, we tune the number of optimization iterations to balance reconstruction accuracy and computational efficiency. In all experiments, we use 1{,}000 iterations for the ALS--Procrustes procedure and 2{,}500 iterations of gradient descent for the Gram-based weight alignment, with learning rate $10^{-3}$. The resulting token-recovery accuracy across models and top-$k$ settings is reported in Table~\ref{tab:attacks-summary}. Additional top-1 and top-100 token-recovery curves under fine-tuning are shown in Figure~\ref{fig:attack-topk-curves}.

All reported results are averaged over multiple random seeds.
For each random seed, token-recovery accuracy is computed as the mean top-$k$ accuracy over 128 randomly sampled tokens, and the final numbers are obtained by averaging across seeds.

\begin{figure*}[ht]
    \centering
    \includegraphics[width=\textwidth]{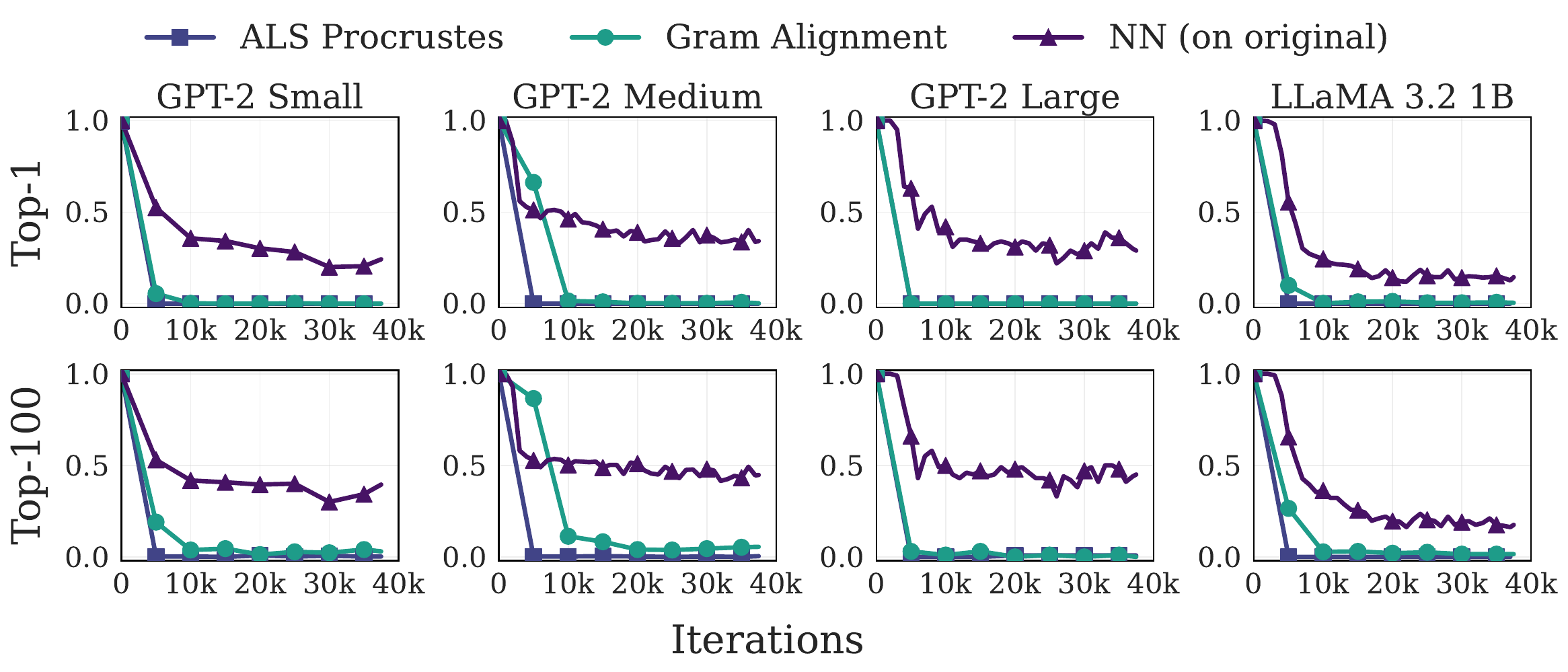}
    \caption{
    Top-1 and top-100 token-recovery accuracy for different attacks under fine-tuning.
    NN inversion is evaluated on nonobfuscated models, while ALS--Procrustes and Gram-based weight alignment attacks are evaluated on obfuscated models.
    }
    \label{fig:attack-topk-curves}
\end{figure*}

\begin{table*}[t]
\caption{Token-recovery accuracy (\%) evaluated at the checkpoint with minimum validation perplexity. Validation perplexities for Original (Orig) and \method (Ours) are shown in the header. \NA{} indicates that an attack is not applicable in that setting.}
\label{tab:attacks-summary}
\centering
\begin{small}
\setlength{\tabcolsep}{4.5pt}
\renewcommand{\arraystretch}{1.05}
\begin{tabular}{lcccccccc}
\toprule
& \multicolumn{2}{c}{GPT-2 S} & \multicolumn{2}{c}{GPT-2 M} & \multicolumn{2}{c}{GPT-2 L} & \multicolumn{2}{c}{Llama~3.2~1B} \\
\cmidrule(lr){2-3}\cmidrule(lr){4-5}\cmidrule(lr){6-7}\cmidrule(lr){8-9}
Attack & Orig & Ours & Orig & Ours & Orig & Ours & Orig & Ours \\
\midrule
Val PPL (min) & 12.71 & 12.70 & 12.02 & 12.07 & 12.51 & 12.46 & 13.77 & 13.62 \\
\midrule

\multicolumn{9}{c}{\textbf{Top-1}} \\
\midrule
NN inversion      & 24.3 & \NA & 34.3 & \NA & 29.0 & \NA & 14.3 & \NA \\
Norm-based        & 0.0 & 0.0 & 0.0 & 0.0 & 0.0 & 0.0 & 0.0 & 0.0 \\
ALS--Procrustes   & \NA & 0.0 & \NA & 0.0 & \NA & 0.0 & \NA & 0.0 \\
Gram Alignment    & \NA & 0.0 & \NA & 0.3 & \NA & 0.0 & \NA & 1.3 \\
\midrule

\multicolumn{9}{c}{\textbf{Top-10}} \\
\midrule
NN inversion      & 35.8 & \NA & 39.3 & \NA & 38.0 & \NA & 16.8 & \NA \\
Norm-based        & 0.0 & 0.0 & 0.0 & 0.0 & 0.0 & 0.0 & 0.0 & 0.0 \\
ALS--Procrustes   & \NA & 0.0 & \NA & 0.0 & \NA & 0.0 & \NA & 0.0 \\
Gram Alignment    & \NA & 1.0 & \NA & 1.0 & \NA & 0.0 & \NA & 1.3 \\
\midrule

\multicolumn{9}{c}{\textbf{Top-100}} \\
\midrule
NN inversion      & 39.5 & \NA & 44.8 & \NA & 47.0 & \NA & 19.0 & \NA \\
Norm-based        & 0.0 & 0.0 & 0.0 & 0.0 & 0.0 & 0.0 & 0.0 & 0.0 \\
ALS--Procrustes   & \NA & 0.2 & \NA & 0.4 & \NA & 0.0 & \NA & 0.0 \\
Gram Alignment    & \NA & 3.0 & \NA & 5.5 & \NA & 0.8 & \NA & 2.0 \\
\bottomrule
\end{tabular}
\end{small}
\end{table*}

\subsection{Token Identification by Attention Fingerprints}\label{app:statmatch}

As discussed in Section~\ref{subsec:statistical-matching}, attention logits remain invariant under \method.
An attacker can therefore use them as fingerprints for statistical matching.

In this appendix, we show empirically that additional fine-tuning breaks the association between these fingerprints and the underlying tokens.

When a user sends a sequence of length $L$, the server can store the inputs and perform one-token inference offline.
Since modern LLMs typically do not use absolute positional embeddings, a token's embedding does not depend on its position after the client sends it.
In particular, for a given token $i$, the server can compute all self-attention logits across all layers and heads.

For each token \(i\), the attacker can construct a feature vector
\[
f(i) \;=\; \bigl(\ell_{1,1}(i), \dots, \ell_{B,H}(i)\bigr) \in \mathbb{R}^{BH},
\]
where \(\ell_{b,h}(i)\) denotes the single-position self-attention logit at layer \(b\) and head \(h\).

Changing only the secret orthogonal matrices does not change the vector $f(i)$, since attention logits are invariant under the corresponding basis transformation.
As a result, the server can accumulate statistics over time, which may weaken the defense.

We empirically show that fine-tuning mitigates this issue by shifting the vectors $f(i)$ away from their previous values, thereby breaking the link between newly observed tokens and their accumulated statistics.
To evaluate this effect, we perform top-$k$ nearest-neighbor search in the space $\mathbb{R}^{BH}$.

Using the same fine-tuning protocol as in the main experiments, we observe that top-$k$ matching accuracy drops substantially, introducing significant uncertainty into statistical matching as shown in Figure~\ref{fig:statistical-matching}.

\begin{figure*}[t]
    \centering
    \includegraphics[width=\textwidth]{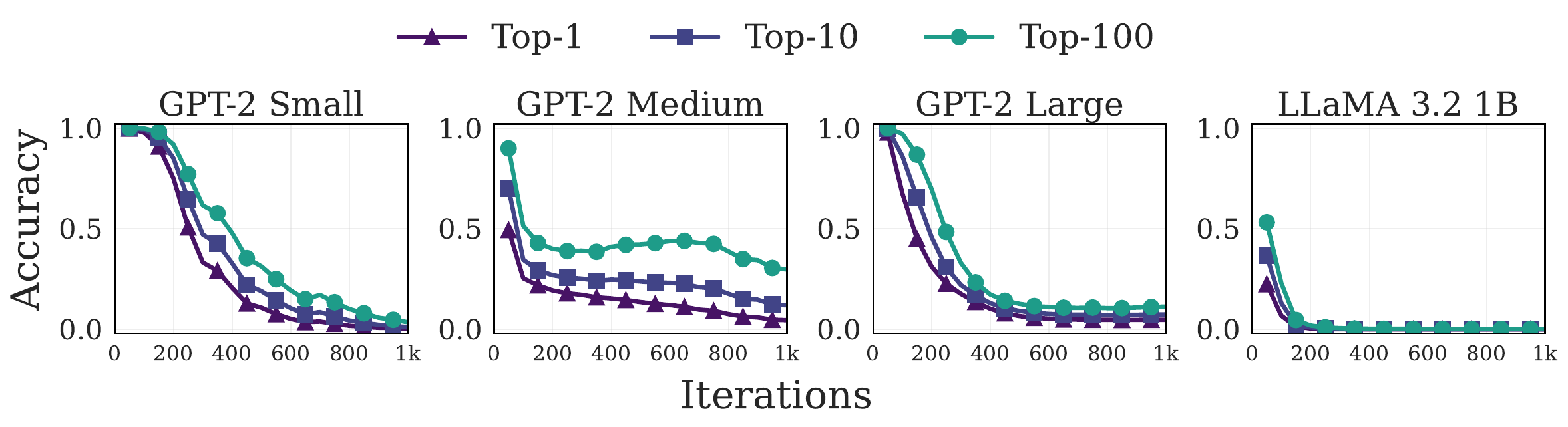}
    \caption{
    Token recovery using invariant attention-logit fingerprints.
    Top-1, top-10, and top-100 accuracy as a function of fine-tuning iterations for GPT-2 Small, Medium, Large, and Llama~3.2~1B.
    }
    \label{fig:statistical-matching}
\end{figure*}

\end{document}

%% file: math_commands.tex
\usepackage{amsmath,amsfonts,bm}

\def\eqref#1{(\ref{#1})}

\def\1{\bm{1}}

\DeclareMathAlphabet{\mathsfit}{\encodingdefault}{\sfdefault}{m}{sl}
\SetMathAlphabet{\mathsfit}{bold}{\encodingdefault}{\sfdefault}{bx}{n}



%% file: macros.tex
\usepackage{amsmath}
\usepackage{amssymb}
\usepackage{mathtools}
\usepackage{amsthm}
\usepackage{nicefrac}
\usepackage{multirow}
\usepackage{tikz-cd} 
\usepackage{subcaption}
\usepackage{multicol,caption}

\usepackage{wrapfig}
\usepackage{algorithm}

\usepackage[flushleft]{threeparttable} 
\usepackage{colortbl}
\definecolor{bgcolor}{rgb}{0.8,1,1}
\definecolor{bgcolor2}{rgb}{0.8,1,0.8}
\definecolor{niceblue}{rgb}{0.0,0.19,0.56}

\usepackage{hyperref}
\hypersetup{colorlinks,linkcolor={blue},citecolor={blue},urlcolor={blue}}

\usepackage{pifont}
\definecolor{PineGreen}{RGB}{0,110,51}
\definecolor{BrickRed}{RGB}{143,20,2}

\usepackage{thmtools}

\usepackage[colorinlistoftodos,bordercolor=orange,backgroundcolor=orange!20,linecolor=orange,textsize=scriptsize]{todonotes}

\definecolor{shadecolor}{gray}{0.9}
\declaretheoremstyle[
headfont=\normalfont\bfseries,
notefont=\mdseries, notebraces={(}{)},
bodyfont=\normalfont,
postheadspace=0.5em,
spaceabove=1pt,
mdframed={
  skipabove=8pt,
  skipbelow=8pt,
  hidealllines=true,
  backgroundcolor={shadecolor},
  innerleftmargin=4pt,
  innerrightmargin=4pt,
  innertopmargin=10pt,    
  innerbottommargin=6pt
}
]{shaded}

\declaretheorem[style=shaded,within=section]{definition}

\declaretheorem[style=shaded,sibling=definition]{lemma}

\usepackage[capitalize,noabbrev]{cleveref}

\usepackage{xspace}
\usepackage{xspace}